\def\ps@pprintTitle{%
  \let\@oddhead\@empty
  \let\@evenhead\@empty
  \def\@oddfoot{\reset@font\hfil\thepage\hfil}
  \let\@evenfoot\@oddfoot
}
\begin{document}

%
%
\newdefinition{definition}{Definition}
\newtheorem{theorem}{Theorem}
\newtheorem{lemma}[theorem]{Lemma}
\newdefinition{remark}{Remark}
\newproof{proof}{Proof}
%
%
%
\renewcommand{\vec}[1]{\mathbf{#1}}
%

\begin{frontmatter}

\title{Extensional Properties of Recurrent Neural Networks}

\author{Evgeny Dantsin}
\author{Alexander Wolpert} 

\affiliation{organization={Department of Computer Science, Roosevelt University},
            city={Chicago},
            state={IL},
            country={USA}}

\begin{abstract}
A property of a recurrent neural network (RNN) is called \emph{extensional} if, loosely speaking, it is a property of the function computed by the RNN rather than a property of the RNN algorithm. Many properties of interest in RNNs are extensional, for example, robustness against small changes of input or good clustering of inputs. Given an RNN, it is natural to ask whether it has such a property. We give a negative answer to the general question about testing extensional properties of RNNs. Namely, we prove a version of Rice's theorem for RNNs: any nontrivial extensional property of RNNs is undecidable.  
\end{abstract}

\end{frontmatter}

%
%

\section{Introduction} 
\label{sec:intro}


Given a recurrent neural network (RNN), how can we check whether it has a certain property? For example, how can we check whether an RNN runs in polynomial time? Or how can we check whether an RNN is robust against small changes of input? 

The first of these two properties is \emph{intensional}, which means that an RNN's running time is a characteristic of the RNN algorithm, not a characteristic of the function computed by the RNN. By contrast, the second property is \emph{extensional}, which means that the robustness is a characteristic of the function computed by an RNN, so all RNNs that compute this function are equally robust against small changes of input.

This paper is about extensional properties of RNNs and, more exactly, about testing such properties: we prove that any nontrivial extensional property of RNNs is undecidable. This is essentially an RNN version of Rice's theorem, the well known theorem from computability theory \cite{Ric53,Rog87}. Of course, this general negative result does not preclude designing algorithms that test undecidable properties for restricted classes of RNNs, like as the general negative results on program verification does not preclude verified software systems. 

To prove the undecidability result, we need a definition of RNNs as a model of computation. There are several such definitions in the literature, for example \cite{ZS15,GWR+16,FNW18,CS21} but, unfortunately, none of them works well for our two-fold purpose. On the one hand, the RNN model should be powerful enough to allow abstractions of RNNs used in practice. On the other hand, the model should be simple enough to allow precise analysis. Therefore, we had to define our own model. Namely, we define an RNN as an algorithm of a special type: it takes a sequence of vectors as input, processes them using operations on vectors and matrices, and outputs another sequence of vectors. A formal definition is given in terms of a variant of the abacus machine model \cite{Lam61,BBJ07}. 

The paper is organized as follows. Section~\ref{sec:prelim} provides the basics of computability theory that are needed for the next sections, including Rice's theorem. The notion of an RNN machine is defined in Section~\ref{sec-rnn}; this section also gives a formal definition of extensional properties of RNN machines. A version of Rice's theorem for RNN machines is proved in Section~\ref{sec:rnn-rice}. Section~\ref{sec:properties} gives examples of extensional properties of RNNs; it describes two groups of such properties, namely, properties of RNNs used for prediction and properties of RNNs used for clustering. 

%
%

\section{Preliminaries: Rice's Theorem}
\label{sec:prelim}


The terminology and notation in this section are mostly from \cite{Rog87,Soa16}.   

\paragraph{Partial computable functions}
A \emph{partial function} from a set $X$ to a set $Y$, denoted by $f: X \rightharpoonup Y$, is a function from $D$ to $Y$ where $D \subseteq X$. Thus, a function (also called a \emph{total function}) is a special case of partial functions that occurs when $D=X$. If $f$ and $g$ are partial functions, the equality $f(x)=g(x)$ means that either both sides of the equality are defined and are equal, or else both are undefined.

We identify algorithms with Turing machines (TMs). More exactly, we consider the variant of TMs described in \cite{Rog87,Soa16}: each such TM computes some partial function from $\mathbb{N}$ to $\mathbb{N}$, where $\mathbb{N}$ denotes the set $\{0, 1, 2, 3, \ldots\}$ of natural numbers. A partial function $f: \mathbb{N} \rightharpoonup \mathbb{N}$ is called \emph{computable} if $f$ is computed by some TM. 

The set of all partial functions from $\mathbb{N}$ to $\mathbb{N}$ is denoted by $\mathcal{N}$. The set of all partial computable functions is denoted by $\mathcal{C}$.

\paragraph{Descriptions of TMs}
Every TM can be encoded by a natural number called the \emph{description} of this TM. Moreover, we can require that every natural number is the description of some TM. Assuming that such a method of encoding is fixed, we write $M_e$ to denote the TM whose description is $e$. 

\paragraph{Canonical numbering}
The descriptions of TMs induce the following function $\varphi$ from $\mathbb{N}$ onto $\mathcal{C}$: every natural number $e$ is mapped to the partial computable function computed by the TM $M_e$. The function $\varphi$ is called the \emph{canonical numbering of $\mathcal{C}$}; the value of $\varphi$ at $e$ is usually written as $\varphi_e$ instead of $\varphi(e)$. The natural number $e$ is called an \emph{index} of the partial computable function $\varphi_e$. Note that each partial computable function is computed by infinitely many TMs and, therefore, it has infinitely many indices. 

\paragraph{Decidable and undecidable sets}
A subset of $\mathbb{N}$ is called \emph{decidable} if its characteristic function is computable; otherwise the subset is called \emph{undecidable}. A subset $S \subseteq \mathbb{N}$ is often identified with the following decision problem: given a natural number, does it belong to $S$? Depending on whether $S$ is decidable or undecidable, this decision problem is referred to as decidable or undecidable respectively. 

\paragraph{Rice's theorem}
Many decision problems about TMs are known to be undecidable, for example, the \emph{halting problem} (given  natural numbers $e$ and $x$, does the TM $M_e$ halts on $x$?). This problem is stated as a question about TMs but, essentially, this question is about partial computable functions: is $\varphi_e$ defined on $x$? Rice's theorem generalizes undecidability results for all properties of TMs that are, in fact, properties of the corresponding partial computable functions. Loosely speaking, Rice's theorem states that it is not possible to determine whether a TM has such a property. 

Formally, we define a \emph{property of TMs} as a subset $P \subseteq \mathbb{N}$, meaning that $P$ is the set of the descriptions of all TMs that have this property. A property $P$ is called \emph{nontrivial} if $P \neq \emptyset$ and $P \neq \mathbb{N}$. A property $P$ is called \emph{extensional} if for any two indices $m$ and $n$ of the same partial computable function, either both $m$ and $n$ are in $P$ or neither of them is in $P$: 
$$
\varphi_m = \varphi_{n} \ \, \Rightarrow \ \, (m \in P \ \Leftrightarrow \ n \in P) .
$$
That is, $P$ is extensional if this set is ``closed'' in the following sense: for every partial computable function $f$, if at least one index of $f$ is in $P$, then all other indices of $f$ are in $P$ too.

\begin{theorem}[Rice's theorem]
\label{th:rice}
Any nontrivial extensional property of TMs is undecidable.
\end{theorem}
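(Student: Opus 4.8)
The plan is to prove Rice's theorem by reduction from the halting problem, which is stated in the excerpt and may be assumed undecidable. Suppose for contradiction that some nontrivial extensional property $P$ is decidable. Since $P$ is nontrivial, we may fix an index $a \in P$ and an index $b \notin P$. I would also fix a distinguished index $i_0$ of the everywhere-undefined function (the function with empty domain); by extensionality, either $i_0 \in P$ or $i_0 \notin P$, and exactly one of these two cases holds. Without loss of generality I would arrange matters so that the everywhere-undefined function is \emph{not} in $P$ (if it is in $P$, work with the complement $\mathbb{N} \setminus P$ instead, which is also nontrivial and extensional, and whose decidability is equivalent to that of $P$). So assume $\varphi_{i_0}$ is the empty function and $i_0 \notin P$, while some fixed $a \in P$ witnesses a function $\varphi_a$ with the property.

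The core step is to build, from an input pair $(e,x)$, a Turing machine whose behavior encodes the halting of $M_e$ on $x$. Informally, I want a machine $N_{e,x}$ that on input $y$ first simulates $M_e$ on $x$, ignoring $y$; if that simulation halts, it then proceeds to compute $\varphi_a(y)$; if the simulation never halts, $N_{e,x}$ also never halts. Thus the partial function computed by $N_{e,x}$ is $\varphi_a$ when $M_e$ halts on $x$, and is the everywhere-undefined function when $M_e$ does not halt on $x$. The existence of an index for $N_{e,x}$, computable from $(e,x)$, is exactly what the s-m-n theorem (or a direct effective construction) guarantees; I would invoke the effectiveness of this construction to obtain a computable function $g$ with $\varphi_{g(e,x)}$ equal to $\varphi_a$ or to the empty function according as $M_e$ halts on $x$ or not.

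Now extensionality does the work. If $M_e$ halts on $x$, then $\varphi_{g(e,x)} = \varphi_a$, so $g(e,x) \in P$ by extensionality (since $a \in P$ and they are indices of the same function). If $M_e$ does not halt on $x$, then $\varphi_{g(e,x)}$ is the empty function, hence $g(e,x) \notin P$ by extensionality (since $i_0 \notin P$). Therefore $M_e$ halts on $x$ if and only if $g(e,x) \in P$. If $P$ were decidable, then composing its characteristic function with the computable $g$ would decide the halting problem, a contradiction.

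The main obstacle I anticipate is not conceptual but the careful bookkeeping around the everywhere-undefined function: one must ensure that this function genuinely lies outside the property (handled by the complementation trick above) so that the two cases of the reduction land on opposite sides of $P$. A secondary point requiring care is justifying that the map $(e,x) \mapsto$ (an index of $N_{e,x}$) is total computable; this is routine given an effective enumeration of TMs and the s-m-n theorem, but it is where the model of computation must be used honestly rather than waved at. Everything else is a direct application of extensionality and the assumed undecidability of the halting problem.
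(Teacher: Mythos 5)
Your proof is correct and follows exactly the route the paper itself indicates (the paper states Theorem~\ref{th:rice} without proof but notes it can be established via the undecidability of the halting problem): the complementation trick to place the empty function outside $P$, the s-m-n construction of $g(e,x)$, and the extensionality argument are all handled properly. No gaps.
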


Rice's theorem can be proved either using the recursion theorem or using undecidability results, for example, using the undecidability of the halting problem.


\paragraph{Rice's theorem for other models of computation}
Rice's theorem is stated above in terms of TMs, what about other models of computation? The class of models of computation for which Rice's theorem holds can be characterized in terms of numberings induced by the models. 

Let $\alpha: \mathbb{N} \to \mathcal{N}$ be a function that assigns the natural numbers to some partial functions from $\mathbb{N}$ to $\mathcal{N}$. Like the notation for the canonical numbering $\varphi$ above, we write $\alpha_e$ instead of $\alpha(e)$. This function $\alpha$ is called an \emph{acceptable numbering of $\mathcal{C}$} if there are computable functions $f: \mathbb{N} \to \mathbb{N}$ and $g: \mathbb{N} \to \mathbb{N}$ such that the following holds for all numbers $e \in \mathbb{N}$:
\begin{itemize}
\item $\alpha_e = \varphi_{f(e)}$ (computability: the partial function $\alpha_e$ is indeed computable by some TM);
\item $\varphi_e = \alpha_{g(e)}$ (completeness: $\alpha$ is indeed a function \emph{onto} $\mathcal{C}$).
\end{itemize}

\begin{theorem}[Rice's theorem in general form]
\label{th:general-rice}
Let $\alpha$ be an acceptable numbering of $\mathcal{C}$. Let $P$ be a subset of $\mathbb{N}$ such that
\begin{itemize}
\item $P$ is nontrivial: $P \neq \emptyset$ and $P \neq \mathbb{N}$;
\item $P$ is extensional with respect to $\alpha$: for all $m, n \in \mathbb{N}$, if $\alpha_m = \alpha_n$ then either both $m$ and $n$ are in $P$ or neither of them is in $P$.
\end{itemize}
Then $P$ is undecidable.
\end{theorem}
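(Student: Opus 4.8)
The plan is to reduce to the ordinary Rice's theorem (Theorem~\ref{th:rice}) by pulling $P$ back along the completeness map $g$. Suppose toward a contradiction that $P$ is decidable, i.e. its characteristic function is computable. I would then consider the set
$$
Q = \{\, e \in \mathbb{N} : g(e) \in P \,\},
$$
and show that $Q$ is a nontrivial extensional property of TMs (with respect to the canonical numbering $\varphi$) that is decidable, which contradicts Theorem~\ref{th:rice}.

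Two of the three required facts about $Q$ are routine. Decidability is immediate: the characteristic function of $Q$ is $\chi_P \circ g$, the composition of the computable function $g$ with the assumed computable characteristic function $\chi_P$ of $P$. For extensionality with respect to $\varphi$, suppose $\varphi_m = \varphi_n$; applying completeness twice gives $\alpha_{g(m)} = \varphi_m = \varphi_n = \alpha_{g(n)}$, so $\alpha_{g(m)} = \alpha_{g(n)}$, and the extensionality of $P$ with respect to $\alpha$ then yields $g(m) \in P \Leftrightarrow g(n) \in P$, that is, $m \in Q \Leftrightarrow n \in Q$.

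The step I expect to be the main obstacle is establishing that $Q$ is nontrivial. The naive attempt—transporting witnesses for the nontriviality of $P$ directly through $g$—fails, because the image of $g$ need not be all of $\mathbb{N}$, so a given element of $P$ (or of its complement) need not lie in the range of $g$. The idea is to route the witnesses through the computability map $f$ instead, and then use the extensionality of $P$ to transfer membership. Since $P$ is nontrivial, fix $a \in P$ and $b \notin P$. For the candidate index $f(a)$, combining computability and completeness gives $\alpha_{g(f(a))} = \varphi_{f(a)} = \alpha_a$; since $a \in P$, extensionality of $P$ with respect to $\alpha$ forces $g(f(a)) \in P$, so $f(a) \in Q$ and hence $Q \neq \emptyset$. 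Symmetrically, $\alpha_{g(f(b))} = \varphi_{f(b)} = \alpha_b$ with $b \notin P$ forces $g(f(b)) \notin P$, so $f(b) \notin Q$ and hence $Q \neq \mathbb{N}$.

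Having shown that $Q$ is nontrivial, extensional with respect to $\varphi$, and decidable, I would invoke Theorem~\ref{th:rice} to reach a contradiction, since that theorem asserts every such $Q$ is undecidable. Therefore the assumption that $P$ is decidable is untenable, and $P$ must be undecidable.
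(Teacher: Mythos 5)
Your proof is correct. Note that the paper itself gives no proof of Theorem~\ref{th:general-rice}: it is stated as a standard fact from computability theory (with Theorem~\ref{th:rice} likewise left unproved), so there is no in-paper argument to compare against. Your reduction is the natural one and is carried out cleanly: pulling $P$ back along the completeness map to $Q = \{e : g(e) \in P\}$ gives decidability of $Q$ from decidability of $P$, and extensionality of $Q$ with respect to $\varphi$ follows from $\varphi_m = \alpha_{g(m)}$. You also correctly identified and resolved the only delicate point, namely that nontriviality of $Q$ does not follow by naively transporting witnesses through $g$ (which need not be surjective); the detour $\alpha_{g(f(a))} = \varphi_{f(a)} = \alpha_a$ combined with extensionality of $P$ with respect to $\alpha$ is exactly the right fix. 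The contradiction with Theorem~\ref{th:rice} then completes the argument. No gaps.
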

The meaning of $P$ in this theorem is the same as the meaning of $P$ in Theorem~\ref{th:rice}: given some model of computation and given some property of its computational devices, $P$ is the set of the descriptions of all computational devices that share this property. 

Note that common models of computation, like recursive functions or register machines, induce acceptable numberings of $\mathcal{C}$. Therefore, by Theorem~\ref{th:general-rice}, any nontrivial extensional property of their computational devices is undecidable. In Section~\ref{sec:rnn-rice}, we use this theorem to prove that any nontrivial extensional property of RNN machines (defined in Section~\ref{sec-rnn}) is undecidable.


\paragraph{Other notation used in the paper}
The set of rational numbers is denoted by $\mathbb{Q}$. We use notation like $\vec{v} = (r_1, \ldots, r_n)$ to denote elements of $\mathbb{Q}^n$. Any such sequence $\vec{v}$ is called a \emph{rational vector}, or just a \emph{vector}. We also consider finite sequences $\vec{v}_1, \ldots, \vec{v}_k$ of vectors, where the vectors may have different numbers of elements. The set of all such sequences is denoted by $\mathcal{V}$. 

%
%

\section{RNN Machines}
\label{sec-rnn}


In the literature, the term ``recurrent neural network'' is rather an umbrella term with different meanings that depend on what aspect of RNNs is analyzed. In this paper, we think of an RNN as an algorithm that takes a sequence $\vec{x}_1, \ldots, \vec{x}_s$ of vectors as input, processes them in a special way, and outputs a sequence $\vec{y}_1, \ldots, \vec{y}_t$ of vectors, see for example the textbooks \cite{Agg23,Dro22}. How is the input sequence transformed into the output one? 

Consider an RNN that takes a sequence from $\mathcal{V}$ as input (typically, but not necessarily, all input vectors have the same number of elements). They are processed using compositions of vector operations of the following three types: 
\begin{itemize}
\item Addition $\vec{u} = \vec{v}+\vec{w}$, where $\vec{u}, \vec{v}, \vec{w} \in \mathbb{Q}^k$.
\item Multiplication $\vec{u} = \vec{A} \vec{v}$, where $\vec{u} \in \mathbb{Q}^m$ and $\vec{v} \in \mathbb{Q}^n$ are vectors and $\vec{A}$ is an $m \times n$ matrix over $\mathbb{Q}$. 
\item Nonlinear operation $\vec{u} = f(\vec{v})$, where $f$ is a nonlinear activation function such as, for example, the ReLU activation function that replaces every rational number $r$ in the vector $\vec{v}$ with the number $\max(0,r)$.
\end{itemize}
Given a sequence $\vec{x}_1, \ldots, \vec{x}_s$ of input vectors, the computation of the RNN on this sequence can be represented as a sequence of vectors 
\begin{equation}
\label{eq:sequence}
\vec{v}_1, \ldots, \vec{v}_l
\end{equation}
where each vector $\vec{v}_i$ is obtained by applying one of the operations above. The application uses either input vectors, or vectors $\vec{v}_j$ with $j < i$ obtained earlier, or vectors ``hard-wired'' into the RNN. If a matrix is used in the application, the matrix is ``hard-wired'' too. Some vectors in (\ref{eq:sequence}) are returned as output vectors; the others are just needed for producing the output.

In addition to the operations above, the RNN can also use control operations needed for deciding when sequence (\ref{eq:sequence}) ends. Note that the RNN is not required to halt on all inputs, and it can produce vectors infinitely long as, for example, in \cite{ZYC+12,HBB21}. 


\paragraph{Abacus machines}
We are going to define RNN machines as register machines, more exactly, as \emph{abacus machines} \cite{Lam61} equipped with additional operations. Therefore, we first briefly describe a variant of the abacus machine model, see \cite{BBJ07} for details.

An \emph{abacus} has an infinite array of registers $R_0, R_1, R_2, \ldots$, where each register either is empty or stores a natural number. The number contained in $R_i$ is denoted by $[i]$. The machine also has a finite sequence of instructions $I_0, I_1, \ldots, I_l$ that can be \emph{conditional} or \emph{non-conditional}. There are two non-conditional instructions:
\begin{itemize}
\item $\mathtt{Halt}$. This is a command to halt.
\item $\mathtt{Zero}(i)$. This is a command to put $0$ in $R_i$ (if $R_i$ was nonempty, $[i]$ is replaced with $0$). After putting $0$ in $R_i$, the machine goes to the next instruction in the sequence $I_0, I_1, \ldots, I_l$. 
\end{itemize}
Any conditional instruction specifies a condition on registers' contents. If the condition is true, then the machine performs a specified action and goes to the next instruction in the sequence $I_0, I_1, \ldots, I_l$. Otherwise, the machine jumps to a specified instruction without performing the action. There are two conditional instructions:
\begin{itemize}
\item $\mathtt{Increase}(i, q)$. If $R_i$ contains a number, increase $[i]$ by $1$. Otherwise, jump to $I_q$. 
\item $\mathtt{Decrease}(i, q)$. If $R_i$ contains a positive number, decrease $[i]$ by $1$. Otherwise, jump to $I_q$. 
\end{itemize}

Let $f$ be a partial function from $\mathbb{N}$ to $\mathbb{N}$. We say that an abacus $A$ \emph{computes} $f$ if for every $x \in \mathbb{N}$, 
\begin{itemize} 
\item at the start, $A$ performs $I_0$, the register $R_0$ contains $x$, and all other registers are empty;
\item $A$ halts if and only if $f$ is defined on $x$; when halting, $R_0$ contains $f(x)$.
\end{itemize}
It is known that the abacus machine model is equivalent to the TM model: every partial function from $\mathbb{N}$ to $\mathbb{N}$ is computable by some abacus if and only if it is computable by some TM \cite{BBJ07}.


\paragraph{Definition of RNN machines}
Like abacus machines, an \emph{RNN machine} has registers $R_0, R_1, R_2, \ldots$ and a sequence of instructions $I_0, I_1, \ldots, I_l$. Each register either is empty or contains a natural number. In addition to the four abacus instructions described above, RNN machines have instructions of another type that operate on rational vectors. Before describing these instructions, we need to say how rational numbers and vectors are stored in registers.

Each rational number $r$ is represented in the form $r = (-1)^s (a/b)$, where $a, b \in \mathbb{N}$, $b \neq 0$, and $s$ is $0$ or $1$. We say that $r$ is \emph{stored in $R_i$, $R_{i+1}$, and $R_{i+2}$} if these registers store $s$, $a$, and $b$ respectively. That is, a rational number is stored in three consecutive registers. 

Vectors are stored in a similar manner. Let $\vec{v} = (r_1, \ldots, r_n)$ be a vector. We say that $\vec{v}$ is \emph{stored beginning with $R_i$} if $R_i$ stores $n$ and the next $3n$ registers store $r_1, \ldots, r_n$. That is, if $r_k = (-1)^s (a/b)$ then 
$$
[i+3k-2]=s, \ [i+3k-1]=a, \ [i+3k]=b . 
$$
If $\vec{v}$ is a vector stored beginning with $R_i$, we also say that $R_i$ \emph{determines} $\vec{v}$.
 
RNN machines have instructions for vector addition, multiplication of a vector by a matrix, and nonlinear activation (for certainty, we consider the ReLU activation, but we could use other nonlinear activation functions as well). In the multiplication instruction, a matrix of size $m \times n$ is represented as a sequence of $m$ vectors, each with $n$ elements.  
\begin{itemize}
\item $\mathtt{Add}(i, j, k, q)$. If $R_i$ determines a vector $(a_1, \ldots, a_n)$ and $R_j$ determines a vector $(b_1, \ldots, b_n)$, then store the vector $(a_1 + b_1, \ldots, a_n + b_n)$ beginning with $R_k$. Otherwise, jump to $I_q$.
\item $\mathtt{Multiply}(i_1, \ldots, i_m, j, k, q)$. The condition in this instruction is that (1) the registers $R_{i_1}, \ldots, R_{i_m}$ determine vectors $\vec{u}_1, \ldots, \vec{u}_m$ with $n$ elements in each, and (2) the register $R_j$ determines a vector $\vec{v}$ with $n$ elements too. If the condition is true, store the product $\vec{A} \vec{v}$ beginning with $R_k$, where $\vec{A}$ is the $m \times n$ matrix in which the rows are the vectors $\vec{u}_1, \ldots, \vec{u}_m$. Otherwise, jump to $I_q$.
\item $\mathtt{NonLinear}(i, j, q)$. If $R_i$ determines a vector $\vec{v} = (r_1, \ldots, r_n)$, store the vector $\vec{v}' = (r'_1, \ldots, r'_n)$ beginning with $R_j$, where $r'_k = \max(0, r_k)$ for $k = 1, \ldots, n$. Otherwise, jump to $I_q$.
\end{itemize}

Sequences of vectors are stored in RNN machines as follows. Let $x = (\vec{x}_1, \ldots, \vec{x}_s)$ be a sequence of vectors, where each vector $\vec{x}_i$ has $n_i$ elements. We say that $x$ is \emph{stored beginning with $R_i$} if $R_i$ stores $s$ and the next 
$$
s + 3(n_1 + \ldots + n_s)
$$
registers store the vectors $\vec{x}_1, \ldots, \vec{x}_s$ successively: $\vec{x}_1$, immediately followed by $\vec{x}_2$, etc. Thus, if $x$ is stored beginning with $R_i$, then $\vec{x}_i$ is stored beginning with $R_j$ where 
$$
j = 1 + i + 3(n_1 + \ldots + n_{i-1}).
$$.  

Let $f$ be a partial function from $\mathcal{V}$ to $\mathcal{V}$. We say that an RNN machine $N$ \emph{computes} $f$ on $x$ if for every input sequence $x = (\vec{x}_1, \ldots, \vec{x}_s)$ of vectors,  
\begin{itemize}
\item at the start, $N$ performs $I_0$, the sequence $x$ is stored beginning with $R_0$, and all other registers are empty;
\item $N$ halts if and only if $f$ is defined on $x$; when halting, $f(x)$ is stored beginning with $R_0$.
\end{itemize}


\paragraph{Extensional properties of RNN machines}
Like TMs, RNN machines can be encoded by natural numbers so that every natural number is the encoding of some RNN machine. We fix such a method of encoding, and we refer to the encoding of an RNN machine as the \emph{description} of this machine. The RNN machine whose description is $e$ is denoted by $N_e$. 

\begin{definition}[function $\psi$]
We define $\psi$ to be the function that maps every natural number $e$ to the partial function $\psi(e): \mathcal{V} \rightharpoonup \mathcal{V}$ computed by the RNN machine $N_e$. The partial function $\psi(e)$ is also written as $\psi_e$. 
\end{definition}
Thus, the family $\{\psi_e\}_{e \in \mathbb{N}}$ consists of those partial functions from $\mathcal{V}$ to $\mathcal{V}$ that can be computed by all RNN machines. From this point of view, the role of $\psi$ for RNN machines is the same as the role of $\varphi$ for Turing machines.

Like a property of TMs, a \emph{property of RNN machines} is defined as a subset $P \subseteq \mathbb{N}$, meaning that $P$ is the set of the descriptions of all RNN machines that share this property. We call $P$ \emph{nontrivial} if $P \neq \emptyset$ and $P \neq \mathbb{N}$. We call $P$ \emph{extensional} if the following holds for all $m, n \in \mathbb{N}$: 
$$
\psi_m = \psi_{n} \ \, \Rightarrow \ \, (m \in P \ \Leftrightarrow \ n \in P) .
$$
That is, if $m$ and $n$ are the descriptions of RNN machines that compute the same partial function from $\mathcal{V}$ to $\mathcal{V}$, then either both $m$ and $n$ are in $P$ or neither of them is in $P$. 

%
%

\section{Rice's Theorem for RNN Machines}
\label{sec:rnn-rice}


In this section, we show that Rice's theorem holds for RNN machines as well as for TMs. To prove the ``RNN version'' of Rice's theorem, we use RNN machines to build an acceptable numbering of $\mathcal{C}$.

\paragraph{Bijection $b$}
Since the set $\mathcal{V}$ (the set of all finite sequences of rational vectors) is countable, there is a computable bijection from $\mathcal{V}$ to $\mathbb{N}$. We fix such a bijection and denote it by $b$. The inverse of $b$ (computable as well) is denoted by $b^{-1}$.

\paragraph{Function $\psi'$}
We define $\psi'$ to be the function that maps every natural number $e$ to the composition $b \circ \psi_e \circ b^{-1}$. That is, the value $\psi'(e)$, also denoted by $\psi'_e$, is a partial function from $\mathbb{N}$ to $\mathbb{N}$ composed of three actions:
\begin{itemize}
\item first, the bijection $b^{-1}$ converts an input natural number to a sequence $\vec{x}_1, \ldots, \vec{x}_s$ of vectors;
\item then $\psi_e$ maps this sequence to another sequence $\vec{y}_1, \ldots, \vec{y}_t$ of vectors;
\item finally, $b$ converts the sequence $\vec{y}_1, \ldots, \vec{y}_t$ to an output natural number.
\end{itemize}

\paragraph{Acceptable numbering}
The function $\psi'$ defined above is an acceptable numbering of $\mathcal{C}$, which is proved in the two lemmas below.

\begin{lemma}[computability of $\psi'$]
\label{th:computability}
There exists a computable function $f: \mathbb{N} \to \mathbb{N}$ such that $\psi'_e = \varphi_{f(e)}$ for all $e \in \mathbb{N}$.
\end{lemma}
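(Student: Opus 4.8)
The plan is to show that the partial function $\psi'_e = b \circ \psi_e \circ b^{-1}$ is computable by a Turing machine, uniformly in $e$, by simulating the RNN machine $N_e$ on a TM and sandwiching this simulation between the computable conversions $b^{-1}$ and $b$. Concretely, I would first construct a single TM $U$ that takes a pair $(e,x)$ of natural numbers (under a fixed computable pairing) and computes $\psi'_e(x)$, and then extract $f$ by currying $e$ out as a parameter: since the construction of the simulating machine from $e$ is itself effective, the map $f$ sending $e$ to an index of that machine is computable. (This is the standard $s$-$m$-$n$ device from \cite{Rog87,Soa16}.)

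For the simulation itself, on input $(e,x)$ the machine $U$ first decodes $e$ to recover the finite instruction sequence $I_0, \ldots, I_l$ of $N_e$; this is effective because the encoding of RNN machines is a fixed numbering. It then computes $b^{-1}(x)$ to obtain the input sequence $\vec{x}_1, \ldots, \vec{x}_s \in \mathcal{V}$ and lays it out as the initial register contents of $N_e$ (stored beginning with $R_0$, all other registers empty). The key observation is that the register configuration is finitely describable at every step: although the abacus has infinitely many registers, only finitely many are ever nonempty, so $U$ can maintain a finite list of index/contents pairs together with a record of which registers are empty. This empty/nonempty distinction must be tracked carefully, since the conditional instructions branch on it. $U$ then simulates $N_e$ one instruction at a time, following the program counter.

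The heart of the matter is that each instruction of an RNN machine is TM-computable on this encoding. The four abacus instructions $\mathtt{Halt}$, $\mathtt{Zero}$, $\mathtt{Increase}$, $\mathtt{Decrease}$ act on single natural-number registers and are trivially simulable. The three vector instructions reduce to finitely many rational-arithmetic operations: reading a vector stored beginning with some $R_i$ is a matter of parsing a length prefix and the subsequent triples; $\mathtt{Add}$ requires checking that two stored vectors have equal length and then adding the corresponding rationals $(-1)^s (a/b)$, which is integer arithmetic on numerators and denominators; $\mathtt{Multiply}$ forms the dot products of the matrix rows with the vector, again sums of products of fractions; and $\mathtt{NonLinear}$ applies $\max(0,\cdot)$ entrywise, a comparison. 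In each case, when the stated condition fails, the simulation simply follows the jump to $I_q$. All of these are effective procedures on natural numbers, so $U$ can carry them out.

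Finally, the simulation preserves halting behaviour: $U$ halts on $(e,x)$ exactly when $N_e$ halts on $b^{-1}(x)$, that is, exactly when $\psi_e$ is defined on $b^{-1}(x)$, which is exactly when $\psi'_e$ is defined on $x$; and when it halts, $U$ reads the output sequence stored beginning with $R_0$, applies $b$, and outputs $b(\psi_e(b^{-1}(x))) = \psi'_e(x)$. Hence $U$ computes $(e,x) \mapsto \psi'_e(x)$, and currying yields a computable $f$ with $\varphi_{f(e)} = \psi'_e$ for all $e$, which is the claim. I expect no deep obstacle: the argument is a careful but routine simulation, and the only points demanding genuine attention are verifying that every augmented vector instruction is computable on the register encoding (in particular, correctly maintaining the empty/nonempty status of registers on which the conditional jumps depend) and confirming that the whole construction is uniform in $e$.
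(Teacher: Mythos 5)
Your proof is correct and follows essentially the same strategy as the paper's: effectively simulate $N_e$, sandwiched between the computable conversions $b^{-1}$ and $b$, and observe that the whole construction is uniform in $e$. The only packaging difference is that you simulate $N_e$ directly on a Turing machine and extract $f$ via the $s$-$m$-$n$ theorem, whereas the paper first builds an abacus machine $A_e$ (using the fact that the vector instructions are abacus-simulable) and then appeals to the abacus--TM equivalence; both routes are standard and equally valid.
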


\begin{proof}
We first show that every natural number $e$ can be transformed to an abacus $A_e$ that computes the partial function $b \circ \psi_e \circ b^{-1}$. On every given input $n \in \mathbb{N}$, this abacus $A$ simulates three successive computations performed by abacus machines $A_1$, $A_2$, and $A_3$ where 
\begin{itemize}
\item $A_1$ computes the bijection $b^{-1}$ on $n$;
\item $A_2$ computes the partial function $\psi_e$ on $b^{-1}(n)$;
\item $A_3$ computes the bijection $b$ on the result of the previous computation.
\end{itemize}
The abacus machines $A_1$ and $A_3$ exist because (1) the fixed bijection $b$ and its inverse $b^{-1}$ are computable, and (2) the abacus machine model is equivalent to the TM machine model. The abacus $A_2$ exists as well because the instructions $\mathtt{Add}$, $\mathtt{Multiply}$, and $\mathtt{NonLinear}$ can be simulated by an abacus in an obvious way. 

Note that the abacus machines $A_1$ and $A_3$ do not depend on $e$. The abacus $A_2$ depends on $e$, and this abacus can be built from the RNN machine $N_e$ effectively. Therefore, the abacus $A$ that simulates the successive computations of $A_1$, $A_2$, and $A_3$ can be obtained from $e$ as well. Therefore, a TM equivalent to $A$ can be obtained from $e$, which gives the required function $f$. 
\end{proof}

\begin{lemma}[completeness of $\psi'$]
\label{th:completeness}
There exists a computable function $g: \mathbb{N} \to \mathbb{N}$ such that  $\varphi_e = \psi'_{g(e)}$ for all $e \in \mathbb{N}$.
\end{lemma}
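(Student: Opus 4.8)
The plan is to exploit the fact that RNN machines contain the full abacus instruction set as a subset, so they are at least as powerful as abacus machines, and hence as TMs. Given $e$, I would construct an RNN machine $N_{g(e)}$ that computes the partial function $b^{-1} \circ \varphi_e \circ b$ from $\mathcal{V}$ to $\mathcal{V}$, that is, $\psi_{g(e)} = b^{-1} \circ \varphi_e \circ b$. A routine computation then gives the conclusion: since $b$ and $b^{-1}$ are mutually inverse total bijections,
$$
\psi'_{g(e)} = b \circ \psi_{g(e)} \circ b^{-1} = (b \circ b^{-1}) \circ \varphi_e \circ (b \circ b^{-1}) = \varphi_e .
$$

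First I would fix abacus machines $B$ and $B'$ computing $b$ and $b^{-1}$ respectively; these exist and do not depend on $e$ because $b$ and $b^{-1}$ are computable and the abacus model is equivalent to the TM model. Next, from $e$ I would obtain an abacus $A_e$ computing $\varphi_e$, which is effective since a TM equivalent to $N_e$'s arithmetic content can be turned into an abacus by the effective TM--abacus equivalence. The machine $N_{g(e)}$ then runs three stages in sequence on input $x$ stored beginning with $R_0$: $B$ converts the stored sequence into the number $b(x)$ in $R_0$; then $A_e$ computes $\varphi_e(b(x))$ in $R_0$; then $B'$ converts that number into the stored sequence $b^{-1}(\varphi_e(b(x)))$ beginning with $R_0$. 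Because the four abacus instructions $\mathtt{Halt}$, $\mathtt{Zero}$, $\mathtt{Increase}$, and $\mathtt{Decrease}$ are themselves RNN-machine instructions, each stage is literally a list of RNN-machine instructions, and their concatenation---with the jump targets of $B$, $A_e$, $B'$ shifted appropriately and each intermediate $\mathtt{Halt}$ replaced by a transfer of control to the start of the next stage---is again an RNN machine. Notably, this construction uses none of the vector instructions $\mathtt{Add}$, $\mathtt{Multiply}$, $\mathtt{NonLinear}$, since the computation is purely arithmetic on the register encodings.

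I would then verify the halting condition. Since $b$ is total, stages $B$ and $B'$ always terminate, so $N_{g(e)}$ halts on $x$ iff $A_e$ halts on $b(x)$ iff $\varphi_e$ is defined on $b(x)$ iff $b^{-1} \circ \varphi_e \circ b$ is defined on $x$; and when it halts, $R_0$ holds the correct output sequence. Finally, $e \mapsto g(e)$ is computable: producing $A_e$ from $e$ is effective, splicing in the fixed machines $B$ and $B'$ and renumbering instructions is a purely syntactic effective operation, and encoding the resulting RNN machine as a natural number is computable under the fixed encoding.

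The main obstacle I anticipate is not the logic of the composition but pinning down the input/output conventions so that the three stages compose correctly on the register encodings: $B$ must read the sequence-of-vectors layout beginning at $R_0$ and leave $b(x)$ in $R_0$, and $B'$ must invert this, producing exactly the register layout that $\psi_{g(e)}$ is required to output. This amounts to interpreting the phrase ``computable bijection $b$'' at the level of these specific register encodings and handling the register housekeeping between stages (clearing scratch registers, aligning where each stage expects its input). All of this is routine abacus programming once the conventions are fixed.
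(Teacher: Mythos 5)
Your proposal is correct and follows essentially the same route as the paper: convert the input sequence to a number via $b$, run an abacus for $\varphi_e$ using the abacus instructions that RNN machines already possess, and convert the result back via $b^{-1}$, yielding $\psi_{g(e)} = b^{-1} \circ \varphi_e \circ b$ and hence $\psi'_{g(e)} = \varphi_e$. You supply more detail than the paper on instruction splicing, halting, and the effectiveness of $g$ (the only blemish is the passing reference to ``$N_e$'s arithmetic content'' where you mean the TM $M_e$), but the argument is the same.
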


\begin{proof}
Since the TM model and the abacus machine model are equivalent, it suffices to show that for every abacus $A$, one can build an RNN machine $N$ such that for every $x, y \in \mathbb{N}$,
$$
A(x)=y \ \ \Leftrightarrow \ \ N\left(b^{-1}(x)\right)=b^{-1}(y) .
$$ 
Such an RNN machine $N$ works as follows. First, $N$ takes the input sequence $b^{-1}(x)$ and converts it back to $x$. Then $N$ uses the instructions of $A$ to compute the number $y$. Finally, $N$ converts $y$ to the sequence $b^{-1}(y)$ and returns this sequence as output.
\end{proof}

\begin{theorem}[Rice's theorem for RNN machines]
\label{th:RNN-rice}
Any nontrivial extensional property of RNN machines is undecidable.
\end{theorem}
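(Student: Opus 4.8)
The plan is to apply Theorem~\ref{th:general-rice} to the numbering $\psi'$ and the given property $P$. By Lemmas~\ref{th:computability} and~\ref{th:completeness}, $\psi'$ is an acceptable numbering of $\mathcal{C}$, so the only remaining task is to verify the two hypotheses on $P$ required by that theorem: nontriviality and extensionality \emph{with respect to $\psi'$}. Nontriviality is immediate, since $P$ is assumed nontrivial and this condition ($P \neq \emptyset$ and $P \neq \mathbb{N}$) refers only to $P$ as a subset of $\mathbb{N}$ and mentions no numbering at all.

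The subtle point is that $P$ is given as extensional with respect to $\psi$ (the RNN numbering), whereas Theorem~\ref{th:general-rice} needs extensionality with respect to $\psi'$. First I would reconcile these by proving that for all $m, n \in \mathbb{N}$,
\[
\psi'_m = \psi'_n \quad \Longleftrightarrow \quad \psi_m = \psi_n .
\]
The reverse implication is immediate, since $\psi_m = \psi_n$ gives $b \circ \psi_m \circ b^{-1} = b \circ \psi_n \circ b^{-1}$. For the forward implication I would argue pointwise, using that $b$ is a bijection. Given any $\vec{v} \in \mathcal{V}$, set $x = b(\vec{v})$, so that $\psi'_m(x) = b(\psi_m(\vec{v}))$ and $\psi'_n(x) = b(\psi_n(\vec{v}))$, where each side is defined exactly when the corresponding value of $\psi_m$ or $\psi_n$ is defined (as $b$ and $b^{-1}$ are total). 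From $\psi'_m(x) = \psi'_n(x)$ and injectivity of $b$ it then follows that $\psi_m(\vec{v}) = \psi_n(\vec{v})$ in the partial-function sense, and as $\vec{v}$ ranges over all of $\mathcal{V}$ this yields $\psi_m = \psi_n$.

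With this equivalence in hand, extensionality of $P$ with respect to $\psi$ is literally the same condition as extensionality with respect to $\psi'$: the hypothesis $\psi_m = \psi_n$ in the definition may be replaced by $\psi'_m = \psi'_n$ without altering which pairs $(m, n)$ it constrains. Hence $P$ is nontrivial and extensional with respect to the acceptable numbering $\psi'$, and Theorem~\ref{th:general-rice} applies directly to give that $P$ is undecidable.

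I expect the main obstacle to be a matter of care rather than depth, namely the correct handling of \emph{partial} functions when passing between $\psi$ and $\psi'$: one must check that composition with the total bijections $b$ and $b^{-1}$ merely relabels arguments and values, and so preserves the pattern of definedness, which is exactly what makes the pointwise equivalence above valid. Everything else is bookkeeping already supplied by Lemmas~\ref{th:computability} and~\ref{th:completeness}.
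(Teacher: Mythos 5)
Your proposal is correct and follows essentially the same route as the paper: establish that $\psi'$ is an acceptable numbering via Lemmas~\ref{th:computability} and~\ref{th:completeness}, transfer extensionality from $\psi$ to $\psi'$, and invoke Theorem~\ref{th:general-rice}. In fact you are slightly more careful than the paper, which passes from $\psi'_m = \psi'_n$ to the conclusion with a bare ``therefore,'' whereas you explicitly justify the needed implication $\psi'_m = \psi'_n \Rightarrow \psi_m = \psi_n$ via the bijectivity of $b$ and the preservation of definedness.
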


\begin{proof}
Let $P$ be a nontrivial extensional property of RNN machines. The extensionality of $P$ means that for all $m, n \in \mathbb{N}$,  
$$
\psi_m = \psi_{n} \ \, \Rightarrow \ \, (m \in P \ \Leftrightarrow \ n \in P) .
$$ 
By definition of $\psi'$, we have $\psi'_e = b \circ \psi_e \circ b^{-1}$ and, therefore,  
$$
\psi'_m = \psi'_{n} \ \, \Rightarrow \ \, (m \in P \ \Leftrightarrow \ n \in P) .
$$
The latter implication states that the set $P$ is extensional with respect to $\psi'$. Also, Lemmas~\ref{th:computability} and \ref{th:completeness} show that $\psi'$ is an acceptable numbering for $\mathcal{C}$. Therefore, we can apply Theorem~\ref{th:general-rice} taking $\psi'$ as $\alpha$. By this theorem, $P$ is undecidable.
\end{proof}

%
%

\section{Examples of Extensional Properties} 
\label{sec:properties}


We illustrate the notion of extensional properties of RNNs by describing two groups of such properties. One group is important for applications where RNNs are expected to be robust against small changes of input. The other group includes properties expected from RNNs that make clustering of their inputs.  


\subsection{Robustness Properties} 
\label{sec:robust}


A typical example of robustness is the property of a prediction system to make ``close'' predictions when input data ``slightly differ'' from each other \cite{DJS+21,ZZZ+23,CDK+24}. How can we define robustness properties in the RNN setting?

It is natural to formalize robustness in terms of continuity of functions defined on metric spaces. Informally speaking, an RNN is robust if the function computed by this RNN is continuous. More formally, let $N_e$ be an RNN machine that takes inputs from a set $X$ and returns outputs that belong to a set $Y$. Let $d_\mathrm{in}$ be a metric on $X$ and let $d_\mathrm{out}$ be a metric on $Y$. We say that $N_e$ is \emph{robust} if the partial function $\psi_e$ is continuous at every input $x$ where it is defined: for every $\epsilon > 0$, there exists a $\delta > 0$ such that for all inputs $x'$ on which $N_e$ halts,
$$
d_\mathrm{in}(x, x') < \delta \ \ \Rightarrow \ \ d_\mathrm{out}(\psi_e(x), \psi_e(x')) < \epsilon .
$$ 
(A similar way to formalize robustness is to define it as uniform continuity or Lipschitz continuity.) Thus, a key point is how we define metric spaces $(X, d_\mathrm{in})$ and $(Y, d_\mathrm{out})$. We give three examples of such definitions. 

\paragraph{Robustness against small perturbations}
This form of robustness is for RNN machines that take as input sequences $x=(\vec{x}_1, \ldots, \vec{x}_s)$, where all vectors are from $\mathbb{Q}^m$, and output sequences $y=(\vec{y}_1, \ldots, \vec{y}_t)$, where all vectors are from $\mathbb{Q}^n$. The parameters $s, t, m, n$ are fixed for a given RNN machine. Metrics $d_\mathrm{in}$ on the inputs and $d_\mathrm{out}$ on the outputs are defined by  
\begin{eqnarray}
d_\mathrm{in}(x, x') & = & \max_{1 \le i \le s} \|\vec{x}_i - \vec{x}'_i\|; \\ 
\label{eq:out}
d_\mathrm{out}(y, y') & = & \max_{1 \le j \le t} \|\vec{y}_j - \vec{y}'_j\|. 
\end{eqnarray}

\paragraph{Robustness against deletions and insertions}
Consider an RNN machine for which (1) the set $X$ of inputs consists of arbitrarily long sequences of arbitrary vectors, and (2) the set $Y$ of outputs is defined as in the paragraph above, namely, as sequences of $t$ vectors from $\mathbb{Q}^n$. Suppose input sequences $x$ and $x'$ are obtained from each other by deletions and insertions of some vectors. If the number of deletions and insertions is small enough (compared with the lengths of $x$ and $x'$), then we expect that the output on $x$ differs from the output on $x'$ not too much. This form of robustness can be formalized using the following metrics. The metric $d_\mathrm{in}$ on inputs is based on longest common subsequences \cite{Bak09}: for sequences $x$ and $x'$ of lengths $|x|$ and $|x'|$ respectively, the distance between them is
$$
d_\mathrm{in}(x, x') = 1 - \frac{\ell(x, x')}{\max (|x|, |x'|)}
$$
where $\ell(x, x')$ is the length of a longest common subsequence of $x$ and $x'$. The metric $d_\mathrm{out}$ on outputs is defined as in (\ref{eq:out}).

\paragraph{Small perturbations combined with deletions and insertions}
This example, in a sense, combines the two examples above. Consider an RNN machine for which (1) the set $X$ of inputs consists of arbitrarily long sequences of vectors from $\mathbb{Q}^m$, and (2) the set $Y$ of outputs consists of arbitrarily long sequences of vectors from $\mathbb{Q}^n$. We can define a combined robustness of such RNNs against ``small'' changes of input by using the following \emph{global alignment metric}, also called the Needleman–Wunsch–Sellers metric \cite{DD09}. 

The set $X$ is equipped with three \emph{editing operations}: deletion, insertion, and replacement of a vector. Each application of these operations has its own \emph{cost}. Namely, the cost of any application of deletions and insertions is a fixed number, independent of the deleted or inserted vector. The cost of the replacement of a vector $\vec{u}$ with a vector $\vec{u}$ is defined as $\| \vec{u} - \vec{v}\|$. Then, for any inputs $x$ and $x'$, the distance $d_\mathrm{in}(x, x')$ is defined as the minimum total cost of a sequence of operations that transform $x$ into $x'$. This distance is a metric \cite{DD09}. The metric $d_\mathrm{out}$ is defined similarly. 


\subsection{Good Clustering Properties} 


The term ``clustering'' commonly refers to the following task: given a metric space, group its points into subsets, called \emph{clusters}, in such way that points in one cluster are closer to each other than to points in other clusters. Also, ``clustering'' often refers to the clusters obtained as a result of solving this task. In this section, we use ``clustering'' with the latter meaning. Namely, we consider the \emph{evaluation} task in which we are given a clustering and we need to evaluate how ``good'' the clustering is in the following sense:
\begin{itemize}
\item how well the clusters are separated from each other (\emph{cluster separation});
\item how close to each other the points within each cluster are (\emph{cluster connectedness}). 
\end{itemize}

\paragraph{Measures for the quality of a given clustering}
There are many ways to quantify cluster separation and cluster connectedness, and there are many ways to combine them. Correspondingly, there are many measures for the quality of a given clustering. As examples, we mention only two well known measures.

The first example is the \emph{Dunn index} \cite{Dun74}. Let $C$ be a clustering that consists of clusters $C_1, \ldots, C_k$. The \emph{Dunn index} of $C$, denoted by $D(C)$, is the ratio
$$
D(C) = \frac{\min_{\,1 \le i < j \le k} \, B(C_i, C_j)}{\max_{\, 1 \le i \le k} \, W(C_i)}
$$
where $B(C_i, C_j)$ is the \emph{between-cluster distance}, and $W(C_i)$ is the \emph{within-cluster distance}. Each of these two distances can be defined in different ways. For example, $B(C_i, C_j)$ can be the distance between the centroids of the clusters $C_i$ and $C_j$ or the minimum distance between any point from $C_i$ and any point from $C_j$. The within-cluster distance $W(C_i)$ can be the diameter of $C_i$ or the average distance between any pair of points in $C_i$. The Dunn index is a measure of the quality of $C$: the larger $D(C)$ is, the ``better'' the clustering $C$ is.

The second example is the \emph{silhouette index} \cite{Rou87} calculated as follows. As above, $C$ is a clustering that consists of clusters $C_1, \ldots, C_k$ (we assume that each of them has more than one point). For every point $x$ in the underlying metric space, we calculate two numbers $b(x)$ and $w(x)$:
\begin{itemize}
\item the between-cluster distance $b(x)$ is defined as the minimum distance from $x$ to all points in the other clusters;
\item the within-cluster distance $w(x)$ is defined as the average distance from $x$ to all other points in the same cluster.
\end{itemize}
Then the \emph{silhouette value} of $x$ is calculated as 
$$
s(x) = \frac{b(x)-a(x)}{\max\{a(x), b(x)\}}.
$$
Thus, the silhouette value of $x$ is between $-1$ and $1$. Closeness to $1$ means that $x$ is well clustered; closeness to $-1$ means that it would be better to move $x$ to another cluster. The \emph{silhouette index} of $C$, denoted by $S(C)$, is defined as the average silhouette value over all points. Like the Dunn index, the silhouette index shows how well the clustering $C$ is: the larger $S(C)$ is, the ``better'' $C$ is.


\paragraph{RNNs that make good clustering}
Consider an RNN machine $N_e$ and fix a metric on its inputs (Section~\ref{sec:robust} gives examples of such metrics). Assume that the set of outputs of $N_e$ is finite; let $\{y_1, \ldots, y_k\}$ be all outputs of $N_e$. Any such RNN machine makes clustering of its inputs in the following sense: $N_e$ groups its inputs into subsets $C_1, \ldots, C_k$ where each $C_i$ consists of all inputs on which $N_e$ returns $y_i$. The subsets $C_1, \ldots, C_k$ can be viewed as \emph{clusters}. 

Note that RNNs make clustering not only when they are developed and used for this purpose. Any RNN with a finite number of outputs, even if it is used for tasks different from clustering, makes clustering of its inputs. The question is how good this clustering is.   

We can formalize the property ``$N_e$ makes good clustering'' using numerous performance measures such as the Dunn index or the silhouette index sketched above. For example, taking silhouette index and choosing some threshold number $t$ close to $1$, this property can be formalized as $S(C) \ge t$. Obviously, all RNNs that compute $\psi_e$ make the same clustering as $N_e$, so this property is extensional.


\paragraph{Other tasks}
The examples above are extensional properties of RNNs used for prediction and clustering. What about other tasks? In fact, many properties of interest in many tasks are extensional, for example, in anomaly detection \cite{CYP+21}, in neural network quantization \cite{DAR+23}, in feature extraction (with RNN autoencoders) \cite{LPL23}, etc.

%

\bibliographystyle{elsarticle-num}
\bibliography{rice-refs}

\end{document}